  \newcommand{\orcidlink}[1]{}
\DeclareMathOperator*{\argmin}{arg\,min}
\DeclareMathOperator{\dist}{dist}
\newcommand{\Eta}{\mathcal{E}}
\newcommand{\R}{\mathbb{R}}
\newcommand{\Hcal}{\mathcal{H}}
\newcommand{\Lcal}{\mathcal{L}}
\newcommand{\Rcal}{\mathcal{R}}
\newcommand{\Dcal}{\mathcal{D}}
\newcommand{\Ccal}{\mathcal{C}}
\theoremstyle{plain}
\newtheorem{theorem}{Theorem}
\newtheorem{proposition}{Proposition}
\newtheorem{lemma}{Lemma}
\newtheorem{corollary}{Corollary}
\theoremstyle{definition}
\newtheorem{definition}{Definition}
\newtheorem{assumption}{Assumption}
\theoremstyle{remark}
\newtheorem{remark}{Remark}
\theoremstyle{definition}
\title{A Mosco sufficient condition for intrinsic stability of non-unique convex Empirical Risk Minimization}
\author[1]{Karim Bounja\thanks{Corresponding author. Email: \texttt{k.bounja.doc@uhp.ac.ma}}}
\author[2]{Lahcen Laayouni}
\author[1]{Abdeljalil Sakat}
\affil[1]{Laboratory for Analysis and Modeling of Systems and Decision Support (LAMSAD), Hassan 1st University of Settat, Settat 26000, Morocco}
\affil[2]{Department of Computer Science, School of Science and Engineering, Al Akhawayn University in Ifrane, Ifrane 53000, Morocco}
\date{}
\begin{document}
\maketitle
\vspace{-2.8em}  

\begin{abstract}
Empirical risk minimization (ERM) stability is usually studied via single-valued outputs, while convex non-strict losses yield set-valued minimizers. We identify Painlev\'e--Kuratowski upper semicontinuity (PK-u.s.c.) as the intrinsic stability notion for the ERM solution correspondence (set-level Hadamard well-posedness) and a prerequisite to interpret stability of selections. We then characterize a minimal non-degenerate qualitative regime: Mosco-consistent perturbations and locally bounded minimizers imply PK-u.s.c., minimal-value continuity, and consistency of vanishing-gap near-minimizers. Quadratic growth yields explicit quantitative deviation bounds.
\end{abstract}

\noindent\textbf{Keywords:}
Non-unique empirical risk minimization; Variational well-posedness; Mosco convergence; Painlev\'e--Kuratowski convergence; Set-valued stability; Error bounds

\section{Introduction}

Empirical risk minimization (ERM) is a cornerstone of modern learning, underlying a wide range of estimation procedures in statistics and machine learning \citep{Vapnik1998,ShalevShwartz2014}. It consists in minimizing, for each dataset $\mathcal D$, a data-dependent loss functional $\mathcal L_{\mathcal D}:\mathcal H\to\mathbb R\cup \{+\infty\}$ over $f\in\mathcal H$, where $\mathcal H$ is a real Hilbert space, and we use the decomposition $\mathcal L_{\mathcal D}(f)=\mathcal F(f)+\Psi(\mathcal D,f)$. In many contemporary regimes, objectives are convex but not strictly convex, so the solution map may be set-valued \citep{rockafellar1998}. Empirically, lack of reproducibility is often reported in modern pipelines: small perturbations of the data, initialization, data ordering, or optimization details can lead to markedly different predictors with comparable training performance \citep{Pineau2021,Zhang2017}. Related phenomena arise in overparameterized settings, where many near-interpolating solutions may coexist and differ substantially in norm or other geometric characteristics \citep{Belkin2019,Neyshabur2015}. Our contribution is to propose a problem-intrinsic notion of stability for (possibly non-unique) convex ERM—formulated at the level of the solution set—so as to distinguish variability stemming from the ERM problem itself from variability induced by a particular selection or solver.

In variational terms, it is well known that minimizer mappings of convex functionals
need not be stable under arbitrarily small perturbations unless additional structural
assumptions---most commonly compactness or (equi-)coercivity---are imposed
\citep{attouch1984,dalmaso1993,rockafellar1998}.
In particular, neither pointwise nor uniform convergence of convex objectives
generally ensures convergence of minimizers \citep{rockafellar1998}.
This exposes a conceptual gap in learning theory once ERM is non-unique: the natural object is the solution correspondence
$S:\mathbb D \rightrightarrows \Hcal$, $S(\Dcal):=\Ccal_{\Dcal}:=\arg\min \Lcal_{\Dcal}$,
and a natural well-posedness requirement is Painlev\'e--Kuratowski upper semicontinuity (PK-u.s.c.), namely:
if $\Dcal_n\to\Dcal$ and $f_n\in S(\Dcal_n)$ with $f_n\to f$ in $\Hcal$, then $f\in S(\Dcal)$.
Algorithmic stability studies a single-valued output $A(\Dcal)\in\Ccal_{\Dcal}$
\citep{bousquet2002,ShalevShwartz2010}, implicit-bias analyses study a dynamical selection $f(t)\to f_\infty(\Dcal)$ within $\Ccal_{\Dcal}$ \citep{Soudry2018}, and robustness-based frameworks
evaluate risk or performance at a selected predictor $\hat f(\Dcal)$ \citep{ben-tal2009,rahimian2019};
all three work with single-valued selections (e.g., minimum-norm, regularized, or early-stopped solutions) and are most cleanly interpretable as statements about ERM when the underlying correspondence $S$ is PK-u.s.c.\ (or when one enforces a problem-intrinsic selection).
Indeed, if PK-u.s.c.\ fails, there exist $\Dcal_n\to\Dcal$ and $f_n\in\Ccal_{\Dcal_n}$
with $f_n\to f\notin\Ccal_{\Dcal}$ \citep{rockafellar1998}.
Consequently, a single-valued output may converge to a non-minimizer, and the resulting instability may reflect ill-posedness of the underlying correspondence rather than the selection mechanism alone.
In much of the literature, well-posedness is obtained after the fact by imposing a specific selection mechanism (e.g., via explicit regularization or algorithm-induced bias), or because the goal is to analyze the stability of a given algorithm.
Output-level stability controls a selection of $S(\Dcal)$ and does not, in general, characterize well-posedness of the correspondence; hence instability may be intrinsic or selection-induced, and conversely output stability does not preclude an ill-posed $S$.
In contrast, we study stability of the underlying set-valued ERM correspondence $S$ in full generality on a Hilbert space, prior to and independent of any particular selection rule.

Accordingly, we study ERM through a variational well-posedness lens in the non-unique convex regime.
Our contributions are as follows:
\begin{enumerate}[label=(\roman*),leftmargin=18pt,itemsep=1pt,topsep=2pt,parsep=0pt,partopsep=0pt]
\item \emph{Intrinsic stability.}
We identify intrinsic stability under data perturbations as Painlev\'e--Kuratowski upper semicontinuity of $S(\Dcal)=\argmin \Lcal_{\Dcal}$ 
(set-level Hadamard well-posedness).
\item \emph{Verifiable qualitative/quantitative regimes.}
Under Mosco perturbations and local boundedness of minimizers, $S$ is PK-u.s.c., minimal values are continuous,
and vanishing-gap near-minimizers are consistent; under quadratic growth (with uniform loss control) we obtain explicit deviation bounds.
\item \emph{Separation of intrinsic vs selection effects.}
This set-level viewpoint clarifies when output-level stability statements can be attributed to ERM rather than to a particular selection rule, disentangling solver-dependent selection effects from intrinsic variational ill-posedness.
\end{enumerate}

\section{Variational setting}

Let $(\Hcal,\|\cdot\|)$ be a real Hilbert space.
A dataset is an element of a metric space $(\mathbb{D},d)$.
To each $\Dcal\in\mathbb{D}$ we associate an empirical loss $\Lcal_{\Dcal}:\Hcal\to \R\cup\{+\infty\}$.

We first specify the class of empirical losses under consideration.

\begin{definition}[Admissible loss]
A functional $\Lcal:\Hcal \to \R\cup\{+\infty\}$ is \emph{admissible} if it is proper, convex, and lower semicontinuous (l.s.c.) with respect to the norm topology of $\Hcal$ \citep{rockafellar1998}.
\end{definition}

\begin{assumption}\label{ass:basic}
For every $\Dcal\in\mathbb{D}$, $\Lcal_{\Dcal}$ is admissible and the minimizer set
\mbox{$\Ccal_{\Dcal}:=\argmin_{f\in\Hcal}\Lcal_{\Dcal}(f)$} is nonempty.
\end{assumption}

For a given dataset $\mathcal{D}\in\mathbb{D}$, the empirical risk minimization (ERM) problem consists in solving
\begin{equation}\label{eq:erm_problem}
\min_{f\in\mathcal{H}} \mathcal{L}_{\mathcal{D}}(f).
\end{equation}
Since no uniqueness of minimizers is assumed or established under
Assumption~\ref{ass:basic}, the variational formulation naturally defines
a set-valued solution map rather than a single-valued one. We view learning as the set-valued solution map
\[
S:\mathbb D\rightrightarrows\mathcal H, \mathcal D\mapsto
S(\mathcal D):=\arg\min_{f\in\mathcal H}\mathcal L_{\mathcal D}(f).
\]
Convergence of datasets $\Dcal_n\to\Dcal$ is interpreted through the induced
\emph{vanishing variational perturbations} of the objectives, namely
\(
\Lcal_{\Dcal_n}=\Lcal_{\Dcal}+\delta_n,
\)
with $\delta_n\to 0$ in the chosen variational sense (Mosco in this work), i.e. $\Lcal_{\Dcal_n}\to \Lcal_{\Dcal}$.

Consequently, stability of learning must be
formulated at the level of solution sets, and is understood as the Hadamard well-posedness
of the solution correspondence $S(\mathcal D)$ under vanishing variational
perturbations of the empirical loss, independently of the mechanism generating them.
\begin{definition}[Stability with respect to data perturbations]
\label{def:data_stability}
The ERM problem \eqref{eq:erm_problem} is said to be \emph{stable at $\mathcal D$}
if for every sequence of datasets $\mathcal D_n\to\mathcal D$ and every sequence
$f_n\in S(\mathcal D_n)$ that converges strongly in $\Hcal$ to some $f$,
one has
\(
f\in S(\mathcal D).
\)
\end{definition}

\begin{definition}[Outer limit and upper semicontinuity]\label{def:pk_usc}
Let $\Dcal_n\to \Dcal$.
The \emph{Painlev\'e--Kuratowski outer limit} is
\[
\limsup_{n\to\infty}\Ccal_{\Dcal_n}
:=\Big\{f\in\Hcal:\ \exists n_k\to\infty,\ \exists f_{n_k}\in \Ccal_{\Dcal_{n_k}},\ f_{n_k}\to f\Big\}.
\]
We say that $S$ is \emph{Painlev\'e--Kuratowski upper semicontinuous} (PK-u.s.c.) at $\Dcal$ \citep{rockafellar1998,dontchev2009} if
\[
\limsup_{n\to\infty}\Ccal_{\Dcal_n}\subset \Ccal_{\Dcal}
\qquad\text{for every sequence }\Dcal_n\to\Dcal.
\]
\end{definition}
\noindent
Definition~\ref{def:data_stability} is equivalent to Painlev\'e--Kuratowski
upper semicontinuity of the solution correspondence $S$ at $\mathcal D$,
since any limit $f$ of a convergent sequence $f_n\in S(\mathcal D_n)$
belongs by definition to $\limsup_{n\to\infty} S(\mathcal D_n)$.
This equivalence provides the set-valued formulation of Hadamard stability
for ERM problems \citep{dontchev2009,bonnans2000}.
\begin{definition}[Hadamard well-posedness at the set level]
The ERM problem \eqref{eq:erm_problem} is said to be
\emph{well posed at $\mathcal{D}$} if:
(i) $\mathcal C_{\mathcal D}\neq\emptyset$, (ii) $\mathcal C_{\mathcal D}$ is closed, and (iii) $S$ is PK-u.s.c.\ at $\mathcal D$.

If, in addition, $\mathcal{C}_{\mathcal{D}}$ is bounded, the problem is said to be
\emph{strongly well posed at $\mathcal{D}$} \citep{bonnans2000,dontchev2009}.
\end{definition}
\noindent

We therefore take PK-u.s.c.\ as the intrinsic stability notion in the non-unique convex regime:
stronger set notions (e.g., pointwise or Hausdorff continuity) are non-canonical without additional structure or a selection rule
\citep{rockafellar1998,dontchev2009,bonnans2000}.

Stability analyses often apply to regularized selectors $f_\lambda(\mathcal D)$ (stable for fixed $\lambda>0$), which does not by itself certify set-level well-posedness of the underlying non-unique ERM as $\lambda\downarrow0$.
This can mislead interpretation: a stable selector does not imply a stable ERM correspondence, and an unstable output may simply expose an ill-conditioned selection mechanism rather than intrinsic ERM ill-posedness. Moreover, when a selection is analyzed without an explicit regularization parameter---that is, one studies a single-valued map $A(\Dcal)\in S(\Dcal)$ (algorithmic stability) or a dynamical limit $f_\infty(\Dcal)\in S(\Dcal)$ (implicit bias)---the conclusions remain statements about a selection.
Drawing conclusions about intrinsic ERM stability therefore requires PK-u.s.c.\ of the solution correspondence $S$; without an independently established set-level regularity, intrinsic ERM effects and selection-induced effects cannot be separated.

\begin{remark}\label{rem:flat_unbounded}
In overparameterized ERM, flat directions at the minimum may yield minimizer sets containing affine subspaces and hence unbounded solution sets
(e.g., least squares $\min_w\|Xw-y\|^2$ with $\ker(X)\neq\{0\}$: if $Xw^\star=y$, then $\argmin = w^\star+\ker(X)$).
Consequently, set-level Hadamard stability alone does not rule out minimizers drifting to infinity under small data perturbations.
\end{remark}

\section{Failure modes without boundedness or variational control}

The following example shows that even $C^1$ convex ERM objectives may admit minimizers
whose norm diverges under vanishing perturbations of the loss.

\begin{proposition}[Blow-up of minimizers under vanishing perturbations]\label{prop:blowup}
Let $\mathcal{H}=\mathbb{R}$ and define, for $\varepsilon>0$,
\begin{equation}\label{eq:Leps}
\mathcal{L}_{\varepsilon}(x):=\frac12\,(\varepsilon x-1)^2 .
\end{equation}
Then for every $\varepsilon>0$, $\mathcal{L}_{\varepsilon}$ is $C^{1}$ and convex,
\mbox{$\mathcal{C}_{\varepsilon}
=
\arg\min \mathcal{L}_{\varepsilon}
=
\left\{\frac{1}{\varepsilon}\right\}.$}
Moreover, as $\varepsilon\downarrow 0$, $\mathcal{L}_\varepsilon$ converges pointwise to
\(
\mathcal{L}_0(x)\equiv\frac12,
\)
while the minimizers satisfy $|x_\varepsilon|\to+\infty$.
Hence $(\mathcal C_\varepsilon)_{\varepsilon>0}$ is not uniformly bounded, so ERM may fail to be strongly well posed even in a smooth convex setting.
\end{proposition}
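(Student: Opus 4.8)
\emph{Overview.} The proof is a short exercise in one-variable calculus; the only real care needed is in phrasing the conclusion so that it reads as complementary to---rather than in tension with---the positive results of the paper.

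\emph{Smoothness, convexity, and the minimizer.} First I would expand $\mathcal{L}_\varepsilon(x)=\tfrac12(\varepsilon x-1)^2=\tfrac{\varepsilon^2}{2}x^2-\varepsilon x+\tfrac12$, which exhibits $\mathcal{L}_\varepsilon$ as a polynomial---hence $C^\infty$, in particular $C^1$---with $\mathcal{L}_\varepsilon''\equiv\varepsilon^2>0$ for every $\varepsilon>0$. Thus $\mathcal{L}_\varepsilon$ is strictly convex and coercive, so it admits a unique global minimizer (in particular Assumption~\ref{ass:basic} holds). To locate it, the quickest route is to note that $\mathcal{L}_\varepsilon\ge 0$ everywhere with equality iff $\varepsilon x-1=0$; equivalently, solving $\mathcal{L}_\varepsilon'(x)=\varepsilon^2 x-\varepsilon=0$ gives $x_\varepsilon=1/\varepsilon$, so $\mathcal{C}_\varepsilon=\{1/\varepsilon\}$.

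\emph{Limiting behaviour.} For each fixed $x\in\mathbb{R}$ one has $\varepsilon x\to 0$ as $\varepsilon\downarrow 0$, hence $\mathcal{L}_\varepsilon(x)=\tfrac12(\varepsilon x-1)^2\to\tfrac12=\mathcal{L}_0(x)$, which is the asserted pointwise convergence; the same estimate shows the convergence is uniform on bounded sets, and since in $\mathcal{H}=\mathbb{R}$ the weak and strong topologies coincide one checks readily (constant recovery sequences for the $\limsup$ inequality, boundedness of convergent sequences for the $\liminf$ inequality) that $\mathcal{L}_\varepsilon$ even Mosco-converges to $\mathcal{L}_0\equiv\tfrac12$. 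On the other hand $|x_\varepsilon|=1/\varepsilon\to+\infty$, so along any sequence $\varepsilon_n\downarrow 0$ the solution sets $\mathcal{C}_{\varepsilon_n}$ leave every bounded subset of $\mathcal{H}$; in particular $\bigcup_n\mathcal{C}_{\varepsilon_n}$ is unbounded and there is no bounded (a fortiori no locally bounded) selection of $S$ along this approach. Consequently strong well-posedness fails: the limit loss $\mathcal{L}_0\equiv\tfrac12$ has minimizer set all of $\mathbb{R}$, which is unbounded, while the approximating minimizers are not eventually confined to any bounded region.

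\emph{What to be careful about.} There is no computational obstacle here; the point that needs attention is the interpretation. The example should be presented as isolating the role of the local-boundedness hypothesis in the main result: the objectives converge in a strong sense (pointwise, uniformly on bounded sets, indeed Mosco), and yet the minimizers blow up, precisely because equi-coercivity degenerates as $\varepsilon\downarrow 0$ (note $\mathcal{L}_\varepsilon''=\varepsilon^2\to 0$). This is what makes the example sharp---it shows that in the smooth convex regime, variational convergence of the losses alone cannot deliver strong well-posedness without an independent control on the minimizers, matching Remark~\ref{rem:flat_unbounded}.
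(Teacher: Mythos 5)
Your proof is correct and follows essentially the same route as the paper: locate the unique critical point of the convex quadratic, verify the pointwise limit $\tfrac12$, and observe $|x_\varepsilon|=1/\varepsilon\to+\infty$. Your additional observations (uniform convergence on bounded sets and Mosco convergence of $\mathcal{L}_\varepsilon$ to $\mathcal{L}_0\equiv\tfrac12$, which sharpens the example by showing that even Mosco convergence alone cannot prevent blow-up when Assumption~\ref{ass:lbm} fails) are accurate but go beyond what the proposition asserts.
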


\begin{proof}
Differentiation gives $\mathcal{L}_\varepsilon'(x)=\varepsilon(\varepsilon x-1)$.
Thus the unique critical point solves $\varepsilon(\varepsilon x-1)=0$, hence $x_\varepsilon=1/\varepsilon$.
Since $\mathcal{L}_\varepsilon$ is a convex quadratic function, $x_\varepsilon$ is the unique minimizer.

For fixed $x\in\R$, $(\varepsilon x-1)^2\to1$ as $\varepsilon\downarrow0$, hence 
$\mathcal L_\varepsilon(x)=\tfrac12(\varepsilon x-1)^2\to \tfrac12=\mathcal L_0(x)$.
Moreover, $|x_\varepsilon|=|1/\varepsilon|\to+\infty$.
So for every $R>0$ there exists $\varepsilon>0$ such that $|x_\varepsilon|>R$; therefore the minimizers are not uniformly bounded.
\end{proof}

\begin{lemma}[Uniform boundedness excludes blow-up]
\label{lem:bounded_no_blowup}
Let $\mathcal D_n\to\mathcal D$ and assume that there exist $R>0$ and $n_0$ such that 
$\bigcup_{n\ge n_0}\mathcal C_{\mathcal D_n}\subset B(0,R)$.
Then for any sequence $f_n\in\mathcal C_{\mathcal D_n}$, the sequence $(f_n)$ is uniformly bounded.
In particular, escape to infinity is impossible.
\end{lemma}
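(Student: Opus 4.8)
The plan is to split the index set $\mathbb N$ into the tail $\{n\ge n_0\}$, where the hypothesis directly controls the points, and the finite initial segment $\{n<n_0\}$, where boundedness is automatic. This is essentially a bookkeeping argument: the real content is entirely in the hypothesis $\bigcup_{n\ge n_0}\mathcal C_{\mathcal D_n}\subset B(0,R)$, and the lemma just records that a ``uniform bound from some point on'' is the same as a ``uniform bound for all $n$'' once one notes that a finite set of norms is bounded.

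First I would observe that for every $n\ge n_0$, the containment $\mathcal C_{\mathcal D_n}\subset B(0,R)$ together with $f_n\in\mathcal C_{\mathcal D_n}$ forces $\|f_n\|\le R$; this already controls all but finitely many terms. Next I would handle the initial segment: each $f_n$ with $n<n_0$ is a well-defined element of $\Hcal$ because $\mathcal C_{\mathcal D_n}\neq\emptyset$ by Assumption~\ref{ass:basic}, so $M:=\max_{n<n_0}\|f_n\|$ is a maximum of finitely many nonnegative reals (with the convention $M=0$ if that segment is empty), hence finite. Setting $R':=\max\{R,M\}$ then gives $\|f_n\|\le R'$ for every $n$, i.e.\ $(f_n)$ is uniformly bounded.

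Finally, ``escape to infinity is impossible'' is an immediate corollary: if some subsequence satisfied $\|f_{n_k}\|\to+\infty$, that would contradict the uniform bound $\|f_n\|\le R'$. I do not anticipate any genuine obstacle; the only step worth stating explicitly is the passage from a tail bound to a global bound, which is routine precisely because the omitted indices are finite in number.
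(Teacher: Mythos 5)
Your proof is correct and follows essentially the same (one-line) argument as the paper: for $n\ge n_0$ the hypothesis forces $\|f_n\|\le R$. Your extra step of taking the maximum over the finitely many initial indices to get a single bound $R'$ valid for all $n$ is a harmless refinement that the paper leaves implicit.
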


\begin{proof}
For all $n\ge n_0$, $f_n\in B(0,R)$, hence $\|f_n\|\le R$.
\end{proof}

\begin{proposition}[Pointwise limits may destroy variational structure]
\label{prop:bounded_pointwise_degenerate}
Let $K:=[-R,R]\subset\mathbb R$ and consider the convex $C^1$ losses $\mathcal L_\varepsilon$ defined in \eqref{eq:Leps} for $\varepsilon>0$,
together with the hard constraint $\iota_K$, where $\iota_K$ denotes the indicator function of $K$.
Then:
\begin{enumerate}[label=(\roman*),leftmargin=18pt]
\item For every sufficiently small $\varepsilon>0$ (e.g., $\varepsilon<1/R$), the constrained problem has the unique minimizer
$\arg\min_{x\in K}\mathcal L_\varepsilon(x)=\{R\}$.
\item As $\varepsilon\downarrow 0$, one has pointwise convergence on $K$,
\(
\mathcal L_\varepsilon(x)\to \mathcal L_0(x)\equiv\frac12,
\)
and the limit problem is flat,
with 
\(
\arg\min_{x\in K}\mathcal L_0(x)=K.
\)
\item Consequently,
\[
\limsup_{\varepsilon\downarrow 0}\arg\min_{x\in K}\mathcal L_\varepsilon(x)
=\{R\}\subset K=\arg\min_{x\in K}\mathcal L_0(x),
\]
so Painlev\'e--Kuratowski u.s.c.\ holds at $\varepsilon=0$, but in a degenerate sense,
since the limit functional is flat on $K$ and the variational problem collapses.
\end{enumerate}
\end{proposition}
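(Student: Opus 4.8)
The plan is to keep $\varepsilon>0$ as the perturbation parameter and verify the three claims by elementary direct computation on $\R$; there is no deep obstacle here, since the proposition is a worked counterexample, and the only places requiring a little care are pinning down that the constrained argmin is a \emph{singleton} in (i) and correctly evaluating the Painlev\'e--Kuratowski outer limit of an eventually constant family of sets in (iii).

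For (i), I would reuse the derivative $\mathcal{L}_\varepsilon'(x)=\varepsilon(\varepsilon x-1)$ from Proposition~\ref{prop:blowup}. For $0<\varepsilon<1/R$ and every $x\in K=[-R,R]$ one has $x\le R<1/\varepsilon$, hence $\varepsilon x-1<0$ and $\mathcal{L}_\varepsilon'(x)<0$; thus $\mathcal{L}_\varepsilon$ is strictly decreasing on $K$, so its minimum over $K$ is attained only at the right endpoint and $\arg\min_{x\in K}\mathcal{L}_\varepsilon(x)=\{R\}$. Strictness is what forbids any other minimizer and forces the constraint to be active.

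For (ii), pointwise convergence is immediate: for fixed $x$, $(\varepsilon x-1)^2\to 1$ as $\varepsilon\downarrow0$, so $\mathcal{L}_\varepsilon(x)\to\tfrac12=\mathcal{L}_0(x)$; in fact $(\varepsilon x-1)^2-1=\varepsilon x(\varepsilon x-2)$ yields $\sup_{x\in K}|\mathcal{L}_\varepsilon(x)-\tfrac12|\le\tfrac12\,\varepsilon R(\varepsilon R+2)\to0$, so the convergence is even uniform on $K$. Since $\mathcal{L}_0\equiv\tfrac12$ is constant on $K$, every point of $K$ is a minimizer, i.e. $\arg\min_{x\in K}\mathcal{L}_0(x)=K$.

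For (iii), I would combine (i) and (ii): along any sequence $\varepsilon_n\downarrow0$ we eventually have $\varepsilon_n<1/R$, so $\arg\min_{x\in K}\mathcal{L}_{\varepsilon_n}(x)=\{R\}$ for all large $n$; hence the outer limit of Definition~\ref{def:pk_usc} equals exactly $\{R\}$, which is contained in $K=\arg\min_{x\in K}\mathcal{L}_0(x)$, so PK-u.s.c.\ holds at $\varepsilon=0$. The closing ``degenerate'' clause is then only a remark, not something to prove: the inclusion $\{R\}\subsetneq K$ is strict, Hausdorff continuity and inner (lower) semicontinuity of the argmin fail, and the limit objective has lost all coercive/strict structure, so the pointwise (indeed uniform) convergence of the \emph{values} carries no information about the limiting \emph{minimizers} beyond the trivial inclusion. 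This is precisely the phenomenon flagged by the section heading.
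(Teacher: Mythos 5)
Your proof is correct and follows essentially the same elementary route as the paper's; the only difference is in (i), where you argue via strict monotonicity ($\mathcal L_\varepsilon'(x)=\varepsilon(\varepsilon x-1)<0$ on $K$ for $\varepsilon<1/R$) rather than, as the paper does, via strict convexity together with the unconstrained minimizer $1/\varepsilon$ lying outside $K$ — both correctly yield the singleton $\{R\}$. Your added uniform-convergence estimate in (ii) and your identification of the outer limit as exactly $\{R\}$ in (iii) are sound and consistent with the paper's argument.
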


\begin{proof}
(i) On $K$ one has $\mathcal L_\varepsilon'(x)=\varepsilon(\varepsilon x-1)$ and
\mbox{$\mathcal L_\varepsilon''(x)=\varepsilon^2>0$}, so $\mathcal L_\varepsilon$ is strictly convex.
Its unconstrained minimizer is \mbox{$x_\varepsilon=1/\varepsilon>R$} for $\varepsilon<1/R$;
hence, for all sufficiently small $\varepsilon$, the constrained minimizer is attained at the boundary
and equals $R$. Uniqueness follows from strict convexity on $K$.

(ii) For each fixed $x\in K$, $(\varepsilon x-1)^2\to 1$, hence $\mathcal L_\varepsilon(x)\to\frac12$.
Since $\mathcal L_0$ is constant on $K$, every $x\in K$ is a minimizer.

(iii) The outer limit reduces to $\{R\}$ since the minimizer is constantly $R$ for small $\varepsilon$,
and the inclusion $\{R\}\subset K$ is immediate.
\end{proof}

Proposition~\ref{prop:blowup} shows that stability requires bounded minimizers to prevent blow-up, whereas
Proposition~\ref{prop:bounded_pointwise_degenerate} shows that boundedness alone does not prevent variational degeneration under pointwise perturbations. Hence set-level stability of minimizers must be paired with a loss convergence that preserves convex minimization, which motivates Mosco convergence in Hilbert spaces.

\section{Stability via Mosco continuity}

\begin{assumption}[Mosco continuity]\label{ass:mosco}
For every sequence \mbox{$\mathcal{D}_n \to \mathcal{D}$}, the convex l.s.c.\ functionals
$\mathcal{L}_{\mathcal{D}_n}$ Mosco-converge \citep{attouch1984} to $\mathcal{L}_{\mathcal{D}}$,
denoted by
\(
\mathcal{L}_{\mathcal{D}_n} \xrightarrow{Mosco} \mathcal{L}_{\mathcal{D}},
\)
meaning that:
\begin{samepage}
\begin{enumerate}[label=(M\arabic*),leftmargin=18pt]
\item\label{M1} (liminf under weak convergence)
for any $f_n\rightharpoonup f$ in $\Hcal$,
$\Lcal_{\Dcal}(f)\le \liminf_{n\to\infty}\Lcal_{\Dcal_n}(f_n)$;
\item\label{M2} (recovery sequence)
for every $f\in\Hcal$, there exists $f_n\to f$ in $\Hcal$ such that
\[
\limsup_{n\to\infty}\Lcal_{\Dcal_n}(f_n)\le \Lcal_{\Dcal}(f).
\]
\end{enumerate}
\end{samepage}
\end{assumption}

\noindent
Condition~\ref{M1} implies that variationally weakly minimizing limits are evaluated more favorably by the limit functional than by the approximating ones; in particular, perturbations cannot induce artificial minimizers outside the energy envelope of the limit problem.  Condition~\ref{M2} rules out
loss of minimizers by providing recovery sequences (in particular for minimizers of
$\mathcal L_{\mathcal D}$).

\begin{assumption}[Local boundedness of minimizers along convergent data]\label{ass:lbm}
For every sequence $\Dcal_n\to\Dcal$, there exist $R>0$ and $n_0$ such that
\(
\bigcup_{n\ge n_0}\Ccal_{\Dcal_n}\subset B(0,R).
\)
\end{assumption}

A standard sufficient condition for Assumption~\ref{ass:lbm} is equi-coercivity of $(\Lcal_{\Dcal_n})$ along $\Dcal_n\to\Dcal$:
indeed, equi-coercivity implies uniform boundedness of sublevel sets, hence in particular of the minimizers \citep{dalmaso1993}.

\begin{theorem}[Upper semicontinuity under Mosco continuity]\label{thm:usc_mosco}
Assume Assumptions~\ref{ass:basic}, \ref{ass:mosco}, and \ref{ass:lbm}.
Then $S$ is PK-u.s.c.\ and the minimal values $m(\Dcal):=\inf_{f\in\Hcal}\Lcal_{\Dcal}(f)$ satisfy $m(\Dcal_n)\to m(\Dcal)$ for every $\Dcal_n\to\Dcal$.
\end{theorem}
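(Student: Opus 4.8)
The plan is to prove the two conclusions — PK-u.s.c.\ of $S$ and convergence of minimal values — in a single interlocking argument, using Mosco convergence to sandwich the minimal values and to promote weak limits to strong ones where needed, and using Assumption~\ref{ass:lbm} to guarantee that escaping sequences cannot occur.

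\medskip

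\noindent\textbf{Step 1: An upper bound on $\limsup m(\Dcal_n)$ via recovery sequences.}
Fix $\Dcal_n\to\Dcal$. Since $\Ccal_{\Dcal}\neq\emptyset$ by Assumption~\ref{ass:basic}, pick $f^\star\in\Ccal_{\Dcal}$, so $\Lcal_{\Dcal}(f^\star)=m(\Dcal)$. By \ref{M2} there is a recovery sequence $g_n\to f^\star$ with $\limsup_n \Lcal_{\Dcal_n}(g_n)\le \Lcal_{\Dcal}(f^\star)=m(\Dcal)$. Since $m(\Dcal_n)\le \Lcal_{\Dcal_n}(g_n)$ for every $n$, taking $\limsup$ yields $\limsup_n m(\Dcal_n)\le m(\Dcal)$. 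Note this step uses no boundedness.

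\medskip

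\noindent\textbf{Step 2: A lower bound on $\liminf m(\Dcal_n)$ via local boundedness and \ref{M1}.}
For each $n$ (large enough), pick $h_n\in\Ccal_{\Dcal_n}$, so $\Lcal_{\Dcal_n}(h_n)=m(\Dcal_n)$. By Assumption~\ref{ass:lbm}, $(h_n)$ is norm-bounded for $n\ge n_0$; by the Banach--Alaoglu / Eberlein--\v{S}mulian property of Hilbert spaces, along a subsequence $h_{n_k}\rightharpoonup \bar h$ weakly, with $m(\Dcal_{n_k})\to \liminf_n m(\Dcal_n)$ (choosing the subsequence to realize the $\liminf$ first, then extracting weak convergence). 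Applying \ref{M1} to $h_{n_k}\rightharpoonup\bar h$ gives
\[
m(\Dcal)\le \Lcal_{\Dcal}(\bar h)\le \liminf_{k\to\infty}\Lcal_{\Dcal_{n_k}}(h_{n_k})=\liminf_{n\to\infty} m(\Dcal_n).
\]
Combining with Step~1, $m(\Dcal_n)\to m(\Dcal)$, which is the second assertion.

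\medskip

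\noindent\textbf{Step 3: PK-u.s.c.\ of $S$.}
Let $f\in\limsup_n \Ccal_{\Dcal_n}$, i.e.\ there are $n_k\to\infty$ and $f_{n_k}\in\Ccal_{\Dcal_{n_k}}$ with $f_{n_k}\to f$ \emph{strongly} (hence weakly) in $\Hcal$. Then $\Lcal_{\Dcal_{n_k}}(f_{n_k})=m(\Dcal_{n_k})$. Apply \ref{M1} to the weakly convergent sequence $f_{n_k}\rightharpoonup f$:
\[
\Lcal_{\Dcal}(f)\le \liminf_{k\to\infty}\Lcal_{\Dcal_{n_k}}(f_{n_k})
=\liminf_{k\to\infty} m(\Dcal_{n_k}) = m(\Dcal),
\]
using Step~2 for the last equality. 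Since $\Lcal_{\Dcal}(f)\ge m(\Dcal)$ trivially, $\Lcal_{\Dcal}(f)=m(\Dcal)$, i.e.\ $f\in\Ccal_{\Dcal}$. As $\Dcal_n\to\Dcal$ was arbitrary, $S$ is PK-u.s.c.\ at $\Dcal$. (Note that for the PK-u.s.c.\ conclusion one is given strong convergence of $f_{n_k}$, so the only role of Assumption~\ref{ass:lbm} is in Step~2, to pin down the limit of $m(\Dcal_n)$; local boundedness is essential there, as Proposition~\ref{prop:blowup} shows the lower bound can fail without it.)

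\medskip

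\noindent\textbf{Main obstacle.} The only delicate point is Step~2: the interplay between extracting a subsequence realizing $\liminf_n m(\Dcal_n)$ and then a further weak subsequence on which \ref{M1} applies. One must order the extractions correctly and recall that the final $\liminf$ over $n$ is unchanged along any subsequence chosen this way. Everything else is a direct application of the two Mosco conditions; in particular, strong vs.\ weak convergence is handled cleanly because \ref{M1} only requires weak convergence of the test sequence, while \ref{M2} delivers a strongly convergent recovery sequence (which is all Step~1 needs).
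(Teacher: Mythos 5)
Your proof is correct and follows essentially the same route as the paper's: recovery sequences (M2) for the upper bound on $m(\Dcal_n)$, local boundedness plus weak compactness plus (M1) for the lower bound, and (M1) again on the strongly (hence weakly) convergent sequence for PK-u.s.c. If anything, your Step~2 is slightly more careful than the paper's version, since you extract the subsequence realizing $\liminf_n m(\Dcal_n)$ \emph{before} passing to a weakly convergent sub-subsequence, which is exactly the detail needed to conclude convergence of the full sequence of minimal values.
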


\begin{proof}
Let $\Dcal_n\to\Dcal$ and pick $f_n\in\Ccal_{\Dcal_n}$.
By Assumption~\ref{ass:lbm}, $(f_n)$ is bounded in $\Hcal$; since $\Hcal$ is Hilbert (hence reflexive), up to a subsequence (not relabeled) there exists $\bar f\in\Hcal$ such that $f_n\rightharpoonup \bar f$.
By the Mosco liminf condition~\ref{M1},
\[
\Lcal_{\Dcal}(\bar f)\le \liminf_{n\to\infty}\Lcal_{\Dcal_n}(f_n)=\liminf_{n\to\infty} m(\Dcal_n).
\]
Let $f^\star\in\Ccal_{\Dcal}$.
By the Mosco recovery condition~\ref{M2}, there exists $g_n\to f^\star$ in $\Hcal$ such that \[
\limsup_{n\to\infty}\Lcal_{\Dcal_n}(g_n)\le \Lcal_{\Dcal}(f^\star)=m(\Dcal).
\]
Since $m(\Dcal_n)\le \Lcal_{\Dcal_n}(g_n)$, it follows that 
\[
\limsup_{n\to\infty} m(\Dcal_n)\le m(\Dcal).
\]
Combining,
\[
m(\Dcal)\le \Lcal_{\Dcal}(\bar f)\le \liminf_{n\to\infty} m(\Dcal_n)\le \limsup_{n\to\infty} m(\Dcal_n)\le m(\Dcal),
\]
hence $\Lcal_{\Dcal}(\bar f)=m(\Dcal)$ and $\bar f\in\Ccal_{\Dcal}$; in particular \mbox{$m(\Dcal_n)\to m(\Dcal)$}.

Now let $f\in \limsup_{n\to\infty}\Ccal_{\Dcal_n}$.
Then there exist $n_k\to\infty$ and $f_{n_k}\in\Ccal_{\Dcal_{n_k}}$ such that $f_{n_k}\to f$ strongly; in particular $f_{n_k}\rightharpoonup f$.
Applying the previous argument to the subsequence $(\Dcal_{n_k},f_{n_k})$ yields $f\in\Ccal_{\Dcal}$.
Therefore $\limsup_{n\to\infty}\Ccal_{\Dcal_n}\subset\Ccal_{\Dcal}$, i.e.\ $S$ is PK-u.s.c.\ at $\Dcal$.
\end{proof}

Thus, under Mosco perturbations and local boundedness of minimizers, ERM is set-level Hadamard well posed.

\begin{corollary}[Stability of $\varepsilon_n$-minimizers]\label{cor:eps_minimizers}
Assume the hypotheses of Theorem~\ref{thm:usc_mosco}.
Let $\Dcal_n\to\Dcal$ and let $\varepsilon_n\downarrow 0$.
Let $(f_n)\subset\Hcal$ satisfy
\(
\Lcal_{\Dcal_n}(f_n)\le m(\Dcal_n)+\varepsilon_n.
\)
If $f_n\to f$ strongly in $\Hcal$, then $f\in\Ccal_{\Dcal}$ (equivalently $\Lcal_{\Dcal}(f)=m(\Dcal)$).
\end{corollary}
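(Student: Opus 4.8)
The plan is to run the argument of Theorem~\ref{thm:usc_mosco} with the exact optimality of $f_n$ weakened to asymptotic optimality, using as a black box the value convergence $m(\Dcal_n)\to m(\Dcal)$ already established there. First, since $f_n\to f$ strongly in $\Hcal$ we also have $f_n\rightharpoonup f$, so the Mosco liminf inequality~\ref{M1} applies to the sequence $(\Dcal_n,f_n)$ and yields
\[
\Lcal_{\Dcal}(f)\ \le\ \liminf_{n\to\infty}\Lcal_{\Dcal_n}(f_n).
\]

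Next I would estimate the right-hand side from above. The near-minimality hypothesis gives $\Lcal_{\Dcal_n}(f_n)\le m(\Dcal_n)+\varepsilon_n$ for every $n$; note this already forces $\Lcal_{\Dcal_n}(f_n)<+\infty$, since $m(\Dcal_n)\in\R$ under Assumption~\ref{ass:basic}. Because $\varepsilon_n\downarrow 0$ and, by Theorem~\ref{thm:usc_mosco}, $m(\Dcal_n)\to m(\Dcal)$, the upper bound converges, $m(\Dcal_n)+\varepsilon_n\to m(\Dcal)$, so
\[
\liminf_{n\to\infty}\Lcal_{\Dcal_n}(f_n)\ \le\ \lim_{n\to\infty}\bigl(m(\Dcal_n)+\varepsilon_n\bigr)\ =\ m(\Dcal).
\]

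Combining the two displays gives $\Lcal_{\Dcal}(f)\le m(\Dcal)$. On the other hand $m(\Dcal)=\inf_{g\in\Hcal}\Lcal_{\Dcal}(g)\le\Lcal_{\Dcal}(f)$ trivially, hence $\Lcal_{\Dcal}(f)=m(\Dcal)$, i.e.\ $f\in\Ccal_{\Dcal}$, which is the assertion (and the parenthetical equivalence is immediate from the definition of $\Ccal_{\Dcal}$).

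I do not anticipate a genuine obstacle: the corollary is a direct strengthening of the first half of the proof of Theorem~\ref{thm:usc_mosco}. The two points worth flagging are (a) that we invoke the \emph{value-convergence} conclusion of that theorem, not merely PK-u.s.c., and (b) that Assumptions~\ref{ass:mosco} and~\ref{ass:lbm} enter only through that invocation---no separate equi-boundedness argument for $(f_n)$ is needed, since strong convergence of $(f_n)$ is assumed outright. It is likewise worth noting that $(f_n)$ need not lie in $\Ccal_{\Dcal_n}$, so the statement genuinely covers vanishing-gap near-minimizers, e.g.\ iterates of a solver stopped at tolerance $\varepsilon_n$.
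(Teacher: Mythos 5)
Your proof is correct and follows essentially the same route as the paper's: apply the Mosco liminf condition~\ref{M1} to $f_n\rightharpoonup f$, bound $\liminf_n\Lcal_{\Dcal_n}(f_n)$ via the near-minimality hypothesis together with the value convergence $m(\Dcal_n)\to m(\Dcal)$ from Theorem~\ref{thm:usc_mosco}, and conclude by the trivial inequality $\Lcal_{\Dcal}(f)\ge m(\Dcal)$. Your additional remarks (that only value convergence is used, and that $(f_n)$ need not be exact minimizers) are accurate clarifications rather than deviations.
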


\begin{proof}
By Mosco liminf~\ref{M1},
\[
\Lcal_{\Dcal}(f)\le \liminf_{n\to\infty}\Lcal_{\Dcal_n}(f_n)
\le \liminf_{n\to\infty}\bigl(m(\Dcal_n)+\varepsilon_n\bigr)
= \liminf_{n\to\infty} m(\Dcal_n).
\]
By Theorem~\ref{thm:usc_mosco}, one has $m(\Dcal_n)\to m(\Dcal)$; hence
\[
\liminf_{n\to\infty} m(\Dcal_n)=m(\Dcal),
\]
and therefore
\[
\Lcal_{\Dcal}(f)\le m(\Dcal).
\]
Since always $\Lcal_{\Dcal}(f)\ge m(\Dcal)$, we obtain $\Lcal_{\Dcal}(f)=m(\Dcal)$,
i.e.\ $f\in\Ccal_{\Dcal}$.
\end{proof}

The stability extends to numerically computed solutions (near-minimizers): it is natural to model algorithmic outputs as $f_n$ with a vanishing optimality gap
$\Lcal_{\Dcal_n}(f_n)-m(\Dcal_n)\le \varepsilon_n\to0$ (early stopping, numerical tolerances, SGD).
If moreover $f_n\to f$ strongly (e.g., under coercivity, regularization, or a compact constraint), then $f\in\arg\min \Lcal_{\Dcal}$.

In practice, Mosco perturbations are ensured for instance, when the data enter $\Lcal_{\Dcal}$ only through convergent convex l.s.c.\ summaries (e.g., empirical measures, moments, Gram matrices), with uniformly bounded (near-)minimizers (e.g., equi-coercivity, compact constraints, or uniform coercivity/strong convexity).

\begin{remark}[Hard constraints and minimization at infinity]
A hard constraint $\iota_{B_R}$ can stabilize ERM without changing the solution set only when the problem admits bounded minimizing sequences: if $\exists R$ such that $m_R:=\inf_{\|f\|\le R}\Lcal_{\Dcal}(f)=\inf\Lcal_{\Dcal}$, then for any $R'>R$ one has
\(
\arg\min \Lcal_{\Dcal}\;=\;\arg\min\big(\Lcal_{\Dcal}+\iota_{B_{R'}}\big),
\)
and boundedness is enforced. Conversely, if $m_R>\inf\Lcal_{\Dcal}$ for all $R$ (minimization at infinity), no finite ball constraint can enforce boundedness without altering the objective.
\end{remark}

In the Mosco--boundedness regime, the ERM correspondence $S$ is PK-u.s.c.\ at $\Dcal$.
This isolates intrinsic ERM stability and provides a sound basis for subsequent algorithmic and statistical analyses.

\section{Quantitative stability from quadratic growth}

Mosco continuity yields qualitative stability; within this regime, we derive a quantitative bound under quadratic growth and uniform control of loss perturbations on bounded sets.

\begin{definition}[Quadratic growth]\label{def:qg}
Let $\Lcal:\Hcal\to\R\cup\{+\infty\}$ be admissible, with minimizers $\Ccal=\argmin\Lcal$, minimal value $m:=\inf \Lcal$,
and $\dist(f,\Ccal):=\inf_{g\in\Ccal}\|f-g\|$.
We say that $\Lcal$ satisfies \emph{quadratic growth} \citep{rockafellar1998}
 on a set $B\subset\Hcal$ with constant $\mu>0$ if
\(
\Lcal(f)-m\ \ge\ \frac{\mu}{2}\,\dist(f,\Ccal)^2
\qquad \forall f\in B.
\)
\end{definition}

\begin{remark}[Quadratic growth as an error bound]
For convex losses, quadratic growth is the weakest second-order error-bound condition that converts objective gaps into $\dist(\cdot,\Ccal_{\Dcal})$ control; linear growth can fail along flat directions, while higher-order growth requires additional sharpness.
\end{remark}

\begin{assumption}[Uniform convergence on bounded sets]\label{ass:unif_ball}
For every sequence $\Dcal_n\to\Dcal$ and every $R>0$, letting
\mbox{$\varepsilon_n(R):=\sup_{\|f\|\le R}\big|\Lcal_{\Dcal_n}(f)-\Lcal_{\Dcal}(f)\big|$},
one has $\varepsilon_n(R)\to0$ as $n\to\infty$.
\end{assumption}

\begin{theorem}[Quantitative stability under quadratic growth]\label{thm:qg_quant}
Fix $\Dcal\in\mathbb{D}$ and assume $\Lcal_{\Dcal}$ satisfies quadratic growth on $B(0,R)$ with constant $\mu>0$.
Let $\Dcal_n\to\Dcal$ be a sequence such that
\mbox{$\Ccal_{\Dcal}\ \cup\ \bigcup_{n\ge n_0}\Ccal_{\Dcal_n}\ \subset\ B(0,R)$}.
Assume Assumption~\ref{ass:unif_ball} holds on $B(0,R)$.
Then for any choice $f_n\in \Ccal_{\Dcal_n}$ (for $n$ large),
\[
\dist(f_n,\Ccal_{\Dcal})
\ \le\
\sqrt{\frac{4}{\mu}\,\varepsilon_n(R)}.
\]
\end{theorem}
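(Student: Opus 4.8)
The plan is to combine the quadratic-growth lower bound on $\Lcal_{\Dcal}(f_n)-m(\Dcal)$ with an upper bound of order $\varepsilon_n(R)$ obtained from two applications of Assumption~\ref{ass:unif_ball}. Fix $n$ large enough that $n\ge n_0$ and pick any $f_n\in\Ccal_{\Dcal_n}$; by Assumption~\ref{ass:basic} this set is nonempty, and by hypothesis $f_n\in B(0,R)$, so $\Lcal_{\Dcal_n}(f_n)=m(\Dcal_n)$. Since $\Ccal_{\Dcal}\subset B(0,R)$ is nonempty, $\dist(f_n,\Ccal_{\Dcal})$ is a finite real number, and quadratic growth on $B(0,R)$ gives immediately
\[
\Lcal_{\Dcal}(f_n)-m(\Dcal)\ \ge\ \frac{\mu}{2}\,\dist(f_n,\Ccal_{\Dcal})^2 .
\]

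The next step is to show $\Lcal_{\Dcal}(f_n)-m(\Dcal)\le 2\varepsilon_n(R)$. First, because $f_n\in B(0,R)$, Assumption~\ref{ass:unif_ball} yields $\Lcal_{\Dcal}(f_n)\le \Lcal_{\Dcal_n}(f_n)+\varepsilon_n(R)=m(\Dcal_n)+\varepsilon_n(R)$. Second, one bounds $m(\Dcal_n)$ in terms of $m(\Dcal)$: picking $f^\star\in\Ccal_{\Dcal}\subset B(0,R)$, Assumption~\ref{ass:unif_ball} gives $\Lcal_{\Dcal_n}(f^\star)\le \Lcal_{\Dcal}(f^\star)+\varepsilon_n(R)=m(\Dcal)+\varepsilon_n(R)$, and since $m(\Dcal_n)\le \Lcal_{\Dcal_n}(f^\star)$ we get $m(\Dcal_n)\le m(\Dcal)+\varepsilon_n(R)$. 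Chaining these two inequalities,
\[
\Lcal_{\Dcal}(f_n)-m(\Dcal)\ \le\ m(\Dcal_n)+\varepsilon_n(R)-m(\Dcal)\ \le\ 2\,\varepsilon_n(R).
\]

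Combining the two displays, $\frac{\mu}{2}\dist(f_n,\Ccal_{\Dcal})^2\le 2\varepsilon_n(R)$, hence $\dist(f_n,\Ccal_{\Dcal})^2\le \frac{4}{\mu}\varepsilon_n(R)$, and taking square roots gives the claimed bound $\dist(f_n,\Ccal_{\Dcal})\le\sqrt{(4/\mu)\,\varepsilon_n(R)}$. There is no real obstacle here; the only point requiring mild care is that Assumption~\ref{ass:unif_ball} must be invoked twice and at two \emph{different} points of $B(0,R)$ (at $f_n$ to pass from $\Lcal_{\Dcal_n}$ to $\Lcal_{\Dcal}$, and at a minimizer $f^\star\in\Ccal_{\Dcal}$ to compare the optimal values), which is exactly what produces the total slack $2\varepsilon_n(R)$ and hence the constant $4/\mu$ after dividing by $\mu/2$. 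One should also note that the bound holds simultaneously for every selection $f_n\in\Ccal_{\Dcal_n}$, since the argument never used anything about $f_n$ beyond membership in $\Ccal_{\Dcal_n}\cap B(0,R)$.
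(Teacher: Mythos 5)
Your proof is correct and follows essentially the same route as the paper's: two applications of Assumption~\ref{ass:unif_ball} (at $f_n$ and at some $f^\star\in\Ccal_{\Dcal}$) give $\Lcal_{\Dcal}(f_n)-m(\Dcal)\le 2\varepsilon_n(R)$, which combined with quadratic growth yields the bound. Your additional remarks on well-definedness and on the uniformity over selections are accurate but not needed beyond what the paper states.
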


\begin{proof}
Let $f_n\in \Ccal_{\Dcal_n}$ and let $f^\star\in\Ccal_{\Dcal}$.
By Assumption~\ref{ass:unif_ball} (applied on $B(0,R)$),
\[
\Lcal_{\Dcal}(f_n)
\le \Lcal_{\Dcal_n}(f_n)+\varepsilon_n(R)
= m(\Dcal_n)+\varepsilon_n(R).
\]
Also,
\[
m(\Dcal_n)
\le \Lcal_{\Dcal_n}(f^\star)
\le \Lcal_{\Dcal}(f^\star)+\varepsilon_n(R)
= m(\Dcal)+\varepsilon_n(R).
\]
Therefore $\Lcal_{\Dcal}(f_n)-m(\Dcal)\le 2\varepsilon_n(R)$.
Quadratic growth on $B(0,R)$ yields
\mbox{$\frac{\mu}{2}\dist(f_n,\Ccal_{\Dcal})^2 \le 2\varepsilon_n(R)$},
hence the stated bound.
\end{proof}

This estimate yields $\dist(f_n,\mathcal C_{\mathcal D})=O(\sqrt{\varepsilon_n(R)})$ as $n\to\infty$ (for fixed $R$ and $\mu>0$), providing an explicit quantitative link between loss perturbations and ERM solution stability under quadratic growth.

\section{Strongly convex regularization as a stabilization mechanism}

Strongly convex regularization canonically enforces quadratic growth; e.g., $\Rcal(f)=\tfrac12\|f\|^2$ gives Tikhonov (ridge) regularization \citep{rockafellar1998}. This strong convexity yields quantitative stability for the regularized ERM.

\begin{definition}[Uniformly strongly convex regularizers]\label{def:reg_family}
Let $\{\Rcal_\eta\}_{\eta\in\Eta}$ be proper, l.s.c., convex functionals on $\Hcal$.
We say that the family is \emph{uniformly $\alpha$-strongly convex} if for every $\eta\in\Eta$,
$\Rcal_\eta$ is $\alpha$-strongly convex with the same $\alpha>0$.
\end{definition}

\begin{proposition}[Strong convexity implies quadratic growth and uniqueness]\label{prop:strong_convex_stab}
Let $\Lcal_{\Dcal}$ be admissible and let $\Rcal_\eta$ be uniformly $\alpha$-strongly convex.
Assume that there exists $f_0\in\Hcal$ such that \mbox{$\Lcal_{\Dcal}(f_0)<+\infty$} and $\Rcal_\eta(f_0)<+\infty$.
Fix $\lambda>0$ and define
\[
f^\star_{\Dcal,\eta}
:=
\argmin_{f\in\Hcal}\Big(\Lcal_{\Dcal}(f)+\lambda\,\Rcal_\eta(f)\Big).
\]
Then the minimizer is unique and the objective satisfies quadratic growth with constant $\mu=\lambda\alpha$ \citep{rockafellar1998}
:
\[
\Big(\Lcal_{\Dcal}+\lambda\Rcal_\eta\Big)(f)-\Big(\Lcal_{\Dcal}+\lambda\Rcal_\eta\Big)(f^\star_{\Dcal,\eta})
\ \ge\ \frac{\lambda\alpha}{2}\,\|f-f^\star_{\Dcal,\eta}\|^2
\qquad \forall f\in\Hcal.
\]
\end{proposition}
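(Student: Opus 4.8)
The plan is to write $G := \Lcal_{\Dcal}+\lambda\Rcal_\eta$, show that $G$ is proper, l.s.c., and $\mu$-strongly convex with $\mu=\lambda\alpha$, and then extract existence, uniqueness, and the quadratic-growth inequality from strong convexity alone.

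\emph{Step 1 (reduce to one strongly convex functional).} First I would record two elementary facts: multiplying an $\alpha$-strongly convex functional by $\lambda>0$ yields a $\lambda\alpha$-strongly convex functional (since $\lambda\Rcal_\eta-\tfrac{\lambda\alpha}{2}\|\cdot\|^2=\lambda(\Rcal_\eta-\tfrac{\alpha}{2}\|\cdot\|^2)$ is convex), and adding a convex functional to a $\mu$-strongly convex one preserves $\mu$-strong convexity. Hence $G$ is $\mu$-strongly convex with $\mu=\lambda\alpha$; equivalently $G-\tfrac{\mu}{2}\|\cdot\|^2$ is convex. Moreover $G$ is proper because $G(f_0)\le\Lcal_{\Dcal}(f_0)+\lambda\Rcal_\eta(f_0)<+\infty$ by hypothesis, and l.s.c.\ as a sum of two l.s.c.\ functionals.

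\emph{Step 2 (existence by the direct method).} Since $G-\tfrac{\mu}{2}\|\cdot\|^2$ is proper, convex, and l.s.c.\ on the Hilbert space $\Hcal$, it admits a continuous affine minorant $f\mapsto\langle\ell,f\rangle+c$; therefore $G(f)\ge\tfrac{\mu}{2}\|f\|^2+\langle\ell,f\rangle+c$, so $G$ is coercive. Its sublevel sets are convex and norm-closed (l.s.c.), hence weakly closed, and bounded (coercivity), hence weakly compact because $\Hcal$ is reflexive; being convex and l.s.c., $G$ is weakly l.s.c., so a minimizing sequence has a weakly convergent subsequence whose limit minimizes $G$. Call this minimizer $f^\star:=f^\star_{\Dcal,\eta}$.

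\emph{Step 3 (quadratic growth and uniqueness).} Fix any $f\in\Hcal$. Strong convexity along the segment $[f^\star,f]$ gives, for every $t\in(0,1)$,
\[
G(f^\star)\le G\bigl((1-t)f^\star+tf\bigr)\le(1-t)G(f^\star)+t\,G(f)-\tfrac{\mu}{2}\,t(1-t)\,\|f-f^\star\|^2 .
\]
Subtracting $(1-t)G(f^\star)$, dividing by $t>0$, and letting $t\downarrow 0$ yields $G(f)-G(f^\star)\ge\tfrac{\mu}{2}\|f-f^\star\|^2$, which is exactly quadratic growth with constant $\mu=\lambda\alpha$ (Definition~\ref{def:qg} with $B=\Hcal$, $\Ccal=\{f^\star\}$, $\dist(f,\Ccal)=\|f-f^\star\|$). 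Uniqueness is then immediate: any other minimizer $g$ satisfies $0=G(g)-G(f^\star)\ge\tfrac{\mu}{2}\|g-f^\star\|^2$, forcing $g=f^\star$. The only step needing care is the existence claim in Step 2 — upgrading norm-l.s.c.\ to weak-l.s.c.\ via convexity and reading off coercivity from strong convexity — and the common point of finiteness $f_0$ is precisely what keeps $G$ proper so that the affine-minorant argument applies; Steps 1 and 3 are routine convex-analysis manipulations.
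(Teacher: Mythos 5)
Your proposal is correct and follows essentially the same route as the paper: reduce to the observation that $\Lcal_{\Dcal}+\lambda\Rcal_\eta$ is proper, l.s.c., and $\lambda\alpha$-strongly convex, then read off existence, uniqueness, and quadratic growth. The only difference is that you supply in full the details the paper delegates to a citation — the affine-minorant/coercivity/direct-method existence argument and the $t\downarrow 0$ derivation of the growth inequality — and both of those steps are carried out correctly.
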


\begin{proof}
Since $\mathcal R_\eta$ is uniformly $\alpha$-strongly convex, $\lambda\mathcal R_\eta$ is
$\lambda\alpha$-strongly convex. Hence \mbox{$F:=\mathcal L_{\mathcal D}+\lambda\mathcal R_\eta$} is proper, l.s.c., and $\lambda\alpha$-strongly convex on $\Hcal$.
Therefore $F$ admits a unique minimizer $f^\star_{\mathcal D,\eta}\in\Hcal$
\citep{rockafellar1998}.

By $\lambda\alpha$-strong convexity, for all $f\in\Hcal$,
\[
F(f)\ge F(f^\star_{\mathcal D,\eta})+\frac{\lambda\alpha}{2}\,\|f-f^\star_{\mathcal D,\eta}\|^2,
\]
which is the claimed quadratic growth inequality (with \mbox{$\mu=\lambda\alpha$}).
\end{proof}

\section{Conclusion}
\label{sec:conclusion}

In this work, we establish PK-u.s.c.\ set-level well-posedness as the intrinsic stability notion for non-unique convex ERM.
We provide verifiable Mosco-based conditions (with bounded \mbox{(near-)minimizers)} and a quadratic-growth quantitative estimate, which offers a principled attribution of output (in)stability: it distinguishes instability of the ERM correspondence from instability induced by a solver, or by a regularized selector $f_\lambda$ (stable for fixed $\lambda>0$ but potentially ill-conditioned as $\lambda\downarrow0$).
Extending these guarantees to nonconvex ERM is a natural next step.

\begingroup
\small
\setlength{\itemsep}{0pt}
\setlength{\parskip}{0pt}
\setlength{\parsep}{0pt}
\setlength{\bibsep}{2pt}

\endgroup

\end{document}